\theoremstyle{plain}
\newtheorem{theorem}{Theorem}[section]
\newtheorem{proposition}[theorem]{Proposition}
\theoremstyle{definition}
\theoremstyle{remark}
\begin{document}


\title{Scale space radon transform-based inertia axis and object central symmetry estimation}

\author{
\name{Aicha Baya Goumeidane\textsuperscript{a}\thanks{CONTACT Aicha Baya Goumeidane Email: a.goumeidane@crti.dz}, Djemel Ziou\textsuperscript{b}, and Nafaa Nacereddine\textsuperscript{a} }
\affil{\textsuperscript{a}Research Center in industrial Technologies, CRTI, P.O.BOX 64, Algiers, Algeria; \textsuperscript{b}Département informatique, Université de Sherbrooke, Québec, Canada}
}

\maketitle

\begin{abstract}
	Inertia Axes are involved in many techniques for image content measurement when involving information obtained from lines, angles, centroids... etc. We investigate, here, the estimation of the main axis of inertia of an object in the image. We identify the coincidence conditions of the Scale Space
Radon Transform (SSRT) maximum and the inertia main axis. We show, that by choosing the appropriate scale parameter, it is possible to match the SSRT maximum and the main axis of inertia location and orientation of the embedded object in the image. Furthermore, an example of use case is presented where binary objects central symmetry computation is derived by means of SSRT projections and the axis of inertia orientation. To this end, some SSRT characteristics have been highlighted and exploited. The experimentations show the SSRT-based main axis of inertia computation effectiveness. Concerning the central symmetry, results are very satisfying as experimentations carried out on randomly created images dataset and existing datasets have permitted to divide successfully these images bases into centrally symmetric and non-centrally symmetric objects. 
\end{abstract}

\begin{keywords}
SSRT, Radon Transform, moments, axis of inertia, central symmetry.
\end{keywords}

\section{Introduction}

Computing the main axis of inertia can be very useful to characterize some aspects of the object. Indeed, it has been involved in many works focusing on symmetry evaluation like the works presented in~\cite{Mara,Tuzikov1, Gothandaraman,Tuzikov2} or for pedestrian detection and for real time estimation of body posture purposes in \cite{Fang} and in \cite{Iwasawa},  respectively. In addition, the authors in \cite{Fan} and in \cite{Malek}, have involved the main axis of inertia computation in medical applications for cross-sectional clinical images analysis purpose and for vessel centerline extraction in retinal image one. In industrial applications, computing axis of inertia has been used, for example, for image processing-based automatic monitoring of industrial process in \cite{Qin} and for non destructive testing of wood in \cite{Long}. Other works proposed in \cite{Mahdikhanlou,Bedini,Zhang} have made use of the axis of inertia to the aim of leaf plant description, ship characterization or finger print location. 

Recently, authors in \cite{Ziou} have proposed a transform called the Scale Space Radon Transform (SSRT), which can be viewed as a significant generalized form of the Radon transform. They have shown that this transform can be used to detect elegantly and accurately thick lines and ellipses through an embedded kernel tuned by a scale space parameter. When the scale space parameter is an appropriate one, the maximum of SSRT represents the centerlines of the linear/elliptical structures presented in the image \cite{Goumeidane1,Goumeidane2}. In this paper we propose to investigate the ability of the SSRT to provide, through its maximum in the SSRT space, the main axis of inertia of the processed object. Moreover, on the basis of some SSRT characteristics and of the axis of inertia orientation, as application, a method to check central symmetry of binary objects is presented.

The remainder of this paper is organized as follows.
In Sect.2, the SSRT and the main axis of inertia computation via geometric moments are introduced. In Sect.3, the relationship between the SSRT maximum and the geometric moments-based main axis of inertia parameters is highlighted. in Sect.4 Computing object symmetry with respect to a point with the SSRT is presented on the basis of some SSRT properties that are demonstrated. Experimentations and results are provided in Sect.5. We finish the paper by drawing the main conclusions. 

\section{Methods and material}
\subsection{Scale Space Radon Transform}
The Scale Space Radon Transform (SSRT), $\check{S_f}$, of an image $f$, is a matching of a kernel and an embedded parametric shape in this image. If the parametric shape is a line parametrized by the location parameter $\rho$ and the angle $\theta$ and the kernel is a Gaussian one, then $\check{S_f}$ is given by \cite{Ziou}
\begin{equation}\label{ssrt}
\check{S_f}(\rho,\theta,\sigma)=\frac{1}{\sqrt{2\pi}\sigma} \int_{\cal X} \int_{\cal Y} f(x,y)e^{-\frac{(x\cos \theta +y\sin \theta -\rho)^2}{2\sigma^2}}dxdy
\end{equation}
Here, $\sigma$ is the scale space parameter. It returns that the Radon Transform (RT) $\check{R_f}$ \cite{N} is a special case of $\check{S_f}$ (when $\sigma\rightarrow 0$)\cite{Ziou}, as the matching between an embedded thin structure in an image and the Dirac distribution $\delta$ of an implicit parametric shape in $\check{R_f}$, is replaced by a matching between an embedded parametric shape of arbitrary thickness in an image and a kernel tuned by a scale parameter in $\check{S_f}$. 
In fact, such replacement allows the transform, through the kernel, to handle correctly embedded shapes even if they are not filiform, unlike $\check{R_f}$. The role of the chosen kernel is to control the parametric shape position inside the embedded object via the scale parameter and then, the detection is reduced to maxima detection in the SSRT space. 
\begin{figure}[h]
	\centering
	\includegraphics[width=80mm]{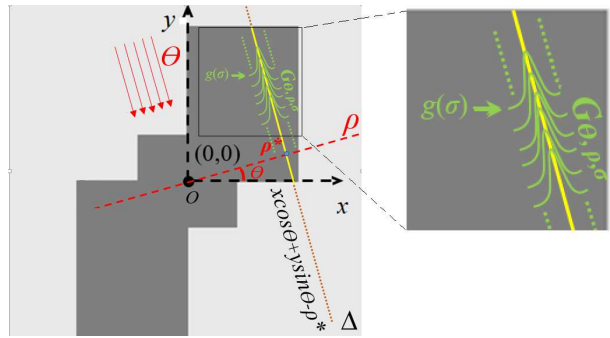}
	\caption{Directional Gaussian $G_{\theta,\rho,\sigma}$ used in SSRT computation and the unidimensional Gaussians $g(\sigma)$. }\label{Fig}	
\end{figure}
 The directional Gaussian in \eqref{ssrt}, $G_{\theta,\rho,\sigma}(x,y)=\frac{1}{\sqrt{2\pi}\sigma}e^{-(x\cos\theta+y\sin\theta-\rho)^2/2\sigma^2}$, can be viewed in Fig.1, where the cross section of it is the Gaussian $g(\sigma)=\frac{1}{\sqrt{2\pi}\sigma} e^{- (x\cos\theta+y\sin\theta-\mu_\rho)^2/2\sigma^2}$ of which mean value $\mu_\rho$ belongs to the line $\Delta$ with equation $ x\cos\theta +y\sin\theta -\rho =0$, as seen in the same figure. Furthermore, the authors in \cite{Nacer} have shown that there is a relationship between RT and SSRT, expressed as follows
 \begin{equation}\label{ssrt_conv}
\check{S_f}=\check{R_f}\circledast g(\sigma).
 \end{equation}
Where $\circledast$ is the convolution symbol. The formulae \eqref{ssrt_conv} constitutes a straightforward way to compute the SSRT which is the convolution of the RT with the 1d kernel $g(\sigma)$.

\subsection{Main axis of inertia computation}
The main axis of inertia of an object embedded in an image $f$ is the minimum-inertia
line of the slope $\phi$ passing through the centroid of this object, of which coordinates are $x_c$ and $y_c$ \cite{Zunic}. The direction $\phi$ can be expressed with second order centred geometric moments $\mu_{11}$, $\mu_{20}$ and $ \mu_{02}$, and is given by \cite{Teague}
\begin{equation}\label{phi}
\tan 2\phi=\frac{2\mu_{11}}{\mu_{20}-\mu_{02}}
\end{equation}
The centred geometric moments have as expressions $\mu_{ij}=\int_x\int_y (x-x_c)^i(y-y_c)^jf(x,y)dxdy$. Moreover, $\mu_{11}$, $\mu_{20}$ and $ \mu_{02}$ can be easily expressed through non-centred ones $m_{11}$, $m_{02}$, $m_{20}$, $m_{01}$, $m_{10}$, where $m_{ij}=\int_x\int_y x^iy^jf(x,y)dxdy$. Furthermore, to have the simplest expressions, instead of using $f(x,y)$ we will use $\frac{f(x,y)}{\int_x\int_y f(x,y) dxdy}$ and rename it $f(x,y)$. Because the image $f$ is non negative and because of the normalization, we have$\quad$ $0\leq f(x,y)\leq 1$. Subsequently, we have 
\begin{equation}\label{m00}
\int_x\int_y f(x,y)dxdy=m_{00}=1
\end{equation} 
Formulas of $\mu_{ij}$ through $m_{ij}$ are given by
\begin{equation}\label{mu_with_m}
\mu_{11}=m_{11}-m_{01}m_{10},\ \mu_{02}=m_{02}-m_{01}^2, \ \mu_{20}=m_{20}-m_{10}^2
\end{equation}
On the other hand, the centroid coordinates are computed as
\begin{equation}\label{xcycmoments}
x_c=\frac{\int_x\int_y x f(x,y)dxdy}{\int_x\int_y f(x,y)dxdy}, \ y_c=\frac{\int_x\int_y y f(x,y)dxdy}{\int_x\int_y f(x,y)dxdy}
\end{equation}
Replacing \eqref{mu_with_m} in \eqref{phi} yields the orientation expression with non-centred moments. Then, if $m_{20}- m_{02}+m_{01}^2 -m_{10}^2 \ne 0 $
\begin{equation}\label{phi_bis}
\tan 2\phi=\frac{2(m_{11}-m_{01}m_{10} )}{m_{20}- m_{02}+m_{01}^2 -m_{10}^2}
\end{equation}
If $m_{20}- m_{02}+m_{01}^2 -m_{10}^2 = 0 $ then $ \phi= \pi/4$ or $ \phi= -\pi/4$ depending on the sign of $m_{11}-m_{01}m_{10}$, as shown in \cite{Teague}. 

\section{The Scale Space Radon Transform maximum and the main inertia axis}
We show in the following that the SSRT maximum provides the main axis of inertia. So, to this end, we need to find the line parameters $\hat{\theta}$ and $\hat{\rho}$ maximizing $\check{S_f}$ in \eqref{ssrt}. Under the continuity assumption of $\check{S_f}$, $\hat{\theta}$ and $\hat{\rho}$ should be the solution of first order derivatives $\frac{\partial \check{S_f} }{\partial\theta}=0$ and $\frac{\partial \check{S_f} }{\partial\rho}=0$, respectively. Let begin by $\theta$
\begin{equation}\label{derivS_teth}
\frac{\partial \check{S_f} }{\partial\theta}=  \int_{\cal X} \int_{\cal Y}f(x,y) \frac{\partial e^{-(x\cos \theta +y\sin \theta -\rho)^2/2{\sigma}^2}}{\partial\theta } dxdy=0
\end{equation}
Equation \eqref{derivS_teth} is highly non linear in $\theta$ and $\rho$. However, the region of interest of this function is around the line $z=x \cos\theta +y \sin \theta-\rho=0$. We propose, then, an approximation of the function $g(z)=e^{-z^2/2\sigma^2}$ around z=0 by the Maclaurin series,
\begin{equation}\label{Maclaurin}
g(z)\simeq\sum^\infty_{n=0}(-1)^n\frac{1}{n!}\left(\frac{z^2}{2\sigma^2}\right)^n=1+\sum^\infty_{n=1}(-1)^n\frac{1}{n!}\left(\frac{z^2}{2\sigma^2}\right)^n
\end{equation}
We prove in the following the convergence of the proposed approximation. The parameter $\sigma$ being always greater than 1, as we will see later, the convergence of the serie in \eqref{Maclaurin} is proven by the alternating series test, where $g(z)\simeq\sum^\infty_{n=0}(-1)^n a_n$, as follows 
\begin{enumerate}
\item  As $z\sim 0$ since the approximation is done around $z=0$ and $\sigma > 1$, then, consequently $\lim_{n \to +\infty} \frac{1}{n!}\left(\frac{z^2}{2\sigma^2}\right)^n=0$.
\item $a_{n+1}=\frac{1}{n+1!}(\frac{z^2}{2\sigma^2})^{n+1}=\frac{1}{n!}(\frac{z^2}{2\sigma^2})^n\frac{1}{n+1}\frac{z^2}{2\sigma^2}=\frac{z^2}{(n+1)2\sigma^2}a_n< a_n$, $\forall n\geq0$.
\end{enumerate} 
From 1 and 2, the serie in \eqref{Maclaurin} converges for $z<1$ and $\frac{z}{\sqrt{2}\sigma}<1$. Consequently, the derivative can be approximated and rewritten as follows 
\begin{equation}\label{derv_g_thet}
\frac{\partial g(z)}{\partial \theta}\simeq\sum^N_{n=1}(-1)^n\frac{1}{n-1!}\frac{2}{(2\sigma^2)^n} z^{2n-1}\frac{\partial z}{\partial \theta}\quad \text{and}\quad\frac{\partial z}{\partial \theta}=-x\sin\theta+y\cos\theta 
\end{equation}
For $N$ equals to 1 ($N=1$) and using \eqref{derv_g_thet}, \eqref{derivS_teth} becomes 
\begin{equation}\label{deriv_S_thet_simplif1}
\int_{\cal X} \int_{\cal Y}f(x,y) \left (x\cos\theta+y\sin\theta-\rho\right) \left(-x\sin\theta+y\cos\theta\right)dxdy=0
\end{equation}
By developing we obtain
\begin{equation} \label{deriv_S_thet_simplif2}
\begin{split}
\int_{\cal X} \int_{\cal Y}f(x,y) \left[-x^2\cos\theta\sin\theta+xy\cos^2\theta-xy\sin^2\theta \right.
\\
\left.+y^2\sin\theta\cos\theta -\rho\left(-x\sin\theta+y\cos\theta\right)\right] dxdy=0
\end{split}
\end{equation}
And then
\begin{equation} \label{deriv_S_thet_simplif3}
\int_{\cal X} \int_{\cal Y}f(x,y) (-x^2(\sin 2\theta)/2+y^2(\sin 2\theta)/2+ xy\cos 2\theta
-\rho(-x\sin\theta+y\cos\theta))dxdy=0
\end{equation}
 Rewriting this equation using the geometrics moments yiels 
\begin{equation}\label{deriv_S_thet_with_m}
-m_{20}(\sin 2\theta)/2+m_{02}(\sin 2\theta)/2+ m_{11}\cos 2\theta
-\int_{\cal X} \int_{\cal Y}f(x,y)\rho(-x\sin\theta+y\cos\theta)dxdy=0
\end{equation}

Consider now the first order condition $\frac{\partial F}{\partial\rho}$.
Using again the Maclaurin development around $z=0$  we have 
\begin{equation}\label{deriv_g_ro}
\frac{\partial g(z)}{\partial \rho}\simeq\sum^N_{n=1}(-1)^n\frac{1}{n-1!}\frac{2}{(2\sigma^2)^n} z^{2n-1}\frac{\partial z}{\partial \rho} \quad \text{where} \quad \frac{\partial z}{\partial \rho}=-1 
\end{equation}
Setting $\frac{\partial g(z)}{\partial \rho}=0$ and taking $N=1$ in \eqref{deriv_g_ro}, leads to the following equation
\begin{equation}\label{dervi_g_ro_simplif1}
\int_{\cal X} \int_{\cal Y}f(x,y)(x\cos\theta+y\sin\theta-\rho)dxdy=0 
\end{equation} 
Then $\rho = \frac{\int_{\cal X} \int_{\cal Y}xf(x,y)dxdy}{\int_{\cal X} \int_{\cal Y}f(x,y)dxdy}\cos\theta+ \frac{\int_{\cal X} \int_{\cal Y}yf(x,y) dxdy}{\int_{\cal X} \int_{\cal Y}f(x,y)dxy}\sin\theta$ and is rewritten using \eqref{xcycmoments} as:
\begin{equation}\label{ro_with_xcyc}
\rho=x_c\cos\theta+y_c\sin\theta 
\end{equation}
Involving \eqref{m00} and the non-centred moments expressions, \eqref{ro_with_xcyc} becomes 
\begin{equation}\label{ro_thet_m}
\rho=m_{10}\cos\theta+m_{01}\sin\theta 
\end{equation}
Replacing $\rho $ by its expression in \eqref{deriv_S_thet_with_m} yields 
\begin{equation}\label{deriv_thet_ro}
\begin{split}
-m_{20}(\sin 2\theta)/2+m_{02}(\sin 2\theta)/2+m_{11}\cos 2\theta-\int_{\cal X} \int_{\cal Y}f(x,y)\left[\left(m_{10}\cos\theta+m_{01}\sin\theta\right)\right. \\\left.\left(-x\sin\theta+y\cos\theta\right)\right]dxdy=0\qquad\qquad\qquad\qquad\qquad\qquad
\end{split}
\end{equation}
Contracting \eqref{deriv_thet_ro} provides
\begin{equation}
\frac{1}{2}\sin 2\theta\left(-m_{20}+ m_{02}-m_{01}^2\right.  \left.+m_{10}^2\right) + \cos 2\theta\left(m_{11}-m_{01}m_{10} \right)=0\
\end{equation}
Then, if $m_{20}- m_{02}+m_{01}^2 -m_{10}^2 \ne 0 $ we can write
\begin{equation}\label{thet_as_phi}
\tan 2\hat{\theta}=\frac{2(m_{11}-m_{01}m_{10} )}{m_{20}- m_{02}+m_{01}^2 -m_{10}^2}
\end{equation}
If $m_{20}- m_{02}+m_{01}^2 -m_{10}^2=0$, then $\theta=\pi/4$ or $\theta=-\pi/4$ depending on the sign of the numerator.

For the following and before going further, let us go back to \eqref{ro_with_xcyc} to estimate $\hat{\rho}$ the location parameter that maximizes $\check{S_f}$ and then derive a necessary relationship between $\hat{\rho}$ and $\hat{\theta}$. So, according to \eqref{ro_with_xcyc}, any critical line $\rho=x\cos\theta+y\sin\theta$ maximizing $\check{S_f}$ passes through the point $(x_c,y_c)$, whatever the angle $\theta$. When this angle is $\hat{\theta}$, the corresponding $\rho$ i.e. $\hat{\rho}$ is given by $\hat{\rho}=x_c\cos\hat{\theta}+y_c\sin\hat{\theta}$, which means that $(x_c,y_c)$ verifies the equation of the line 
\begin{equation}\label{hatro_hat_thetxy}
\hat{\rho}=x\cos\hat{\theta}+y\sin\hat{\theta}
\end{equation} 
Consequently, derivating the SSRT with respect to $\theta$ and $\rho$ provides, in the image domain, a line of which equation is represented by \eqref{hatro_hat_thetxy}.

Now, before showing the relationship between the orientation $\phi$ in \eqref{phi} and the angle $\hat\theta$ in \eqref{thet_as_phi}, and to be in accordance with our proposition which states that the SSRT maximum defines the inertia principal axis, we must prove that the computed critical point ($\hat{\theta},\hat{\rho}$) is a maximum. To this end, let us compute $H$ the Hessian of $\check{S_f}$.
\begin{equation}\label{Hessian}
H=\begin{pmatrix} H_{11}&H_{12}\\ H_{21}&H_{22} \\ \end{pmatrix}=\begin{pmatrix}\frac{\partial^2 }{\partial \theta^2}\check{S_f}& \frac{\partial^2 }{\partial \theta \partial \rho}\check{S_f}\\ \frac{\partial^2 }{\partial \rho\partial \theta}\check{S_f}& \frac{\partial^2 }{\partial \rho^2}\check{S_f} \\ \end{pmatrix} \normalsize
\end{equation}
To be a maximum, the critical point ($\hat{\theta}$,$\hat{\rho}$) must verify two conditions: 1) $H_{11}$ at ($\hat{\theta}$,$\hat{\rho}$)  is negative, 2) the determinant of $H$, det($H$) at ($\hat{\theta}$,$\hat{\rho}$) is positive. So let us write $\check{S_f}$ as $\check{S_f}(\rho,\theta)=\frac{1}{\sqrt{2\pi} \sigma}\int_{\cal X} \int_{\cal Y}f(x,y)e^{-B}dxdy$, where $B=(x\cos \theta+y\sin \theta-\rho)^2/2{\sigma}^2$. Hence, let us compute the second order partial derivatives at the critical point $(\hat{\theta}, \hat{\rho})$.
\begin{enumerate}
 \item Computing $\frac{\partial^2 \check{S_f}}{\partial \theta^2}$ at the critical point \small
\begin{equation}
\begin{split}
\frac{\partial^2 \check{S_f}}{\partial \theta^2}=\frac{-2}{\sqrt{2\pi}\sigma }\int_{\cal X}\int_{\cal Y} f(x,y) e^{-B}\left[-2\left(x\cos\theta+y\sin\theta-\rho\right)^2 
\left(-x\sin\theta+y\cos\theta\right)^2+\right.
\\\left.\left(-x\sin\theta+y\cos\theta\right)^2-
\left(x\cos\theta+y\sin\theta-\rho\right)\left(x\cos\theta+y\sin\theta\right)\right]dxdy\qquad\qquad
\end{split}
\end{equation}
\normalsize
At the critical point, we have $\hat{\rho}=x\cos\hat{\theta}+y\sin\hat{\theta}$ and then B=0. Consequently,
\begin{equation}
\frac{\partial^2 \check{S_f}}{\partial \theta^2}=\frac{-2}{\sqrt{2\pi}\sigma }\int_{\cal X}\int_{\cal Y} f(x,y)(-x\sin\hat{\theta}+y\cos\hat{\theta})^2 dxdy <0
\end{equation}

 \item Computing $\frac{\partial^2 \check{S_f}}{\partial \rho^2}$ at the critical point 
\begin{equation}
\frac{\partial^2 \check{S_f}}{\partial \rho^2}=\frac{2}{\sqrt{2\pi}\sigma }\int_{\cal X}\int_{\cal Y} f(x,y) e^{-B}(2(x\cos\theta+y\sin\theta-\rho)^2-1)dxdy
\end{equation}
At the critical point 
\begin{equation}
\frac{\partial^2 \check{S_f}}{\partial \rho^2}=\frac{-2}{\sqrt{2\pi}\sigma }\int_{\cal X}\int_{\cal Y} f(x,y)dxdy=\frac{-2}{\sqrt{2\pi}\sigma }
\end{equation}

 \item Computing $\frac{\partial^2 \check{S_f}}{\partial\theta\partial\rho}$ at the critical point 
\begin{equation}
\begin{split}
\frac{\partial^2\check{S_f}}{\partial\theta\partial\rho}=\frac{2}{\sqrt{2\pi}\sigma }\int_{\cal X}\int_{\cal Y} f(x,y) e^{-B}(-2(x\cos\theta+y\sin\theta-\rho)(x\sin\theta+y\cos\theta)\\+(-x\sin\theta+y\cos\theta))dxdy\qquad\qquad\qquad\qquad
\end{split}
\end{equation}
At the critical point 
\begin{equation}
\frac{\partial^2 \check{S_f}}{\partial\theta\partial\rho}=\frac{2}{\sqrt{2\pi}\sigma }\int_{\cal X}\int_{\cal Y} f(x,y) (-x\sin\hat{\theta}+y\cos\hat{\theta})dxdy
\end{equation}
\end{enumerate}
As $\check{S_f}$ is a $C^2$ function, then the mixed partial derivatives in \eqref{Hessian} are the same. It follows that the determinant $det(H)$ equals to 
\begin{equation}\label{Determinant}
\begin{split}
det(H)=\frac{4}{2\pi\sigma}\left[\int_{\cal X}\int_{\cal Y} f(x,y) (-x\sin\hat{\theta}+y\cos\hat{\theta})^2dxdy- \right.\\ \left.
\left(\int_{\cal X}\int_{\cal Y} f(x,y) (-x\sin\hat{\theta}+y\cos\hat{\theta})dxdy\right)^2\right]
\end{split}
\end{equation}
We recall that $0\leq f(x,y)\leq 1$ and $\int \int f(x,y)dxdy=1$, then $f(x,y)$ can be considered as a probability density function of two variables and $v(x,y)=(-x\sin\hat{\theta}+y\cos\hat{\theta})$ a 2D random variable. Thus, $det(H)$ in \eqref{Determinant} is a variance up to a positive constant, which makes it non-negative. However, in our case this variance could not be null as $v(x,y)$ is not constant. Then $det(H)>0$. Consequently, $H_{11} <0$ and det$(H) >0 $ for the critical point ($\hat{\theta},\hat{\rho}$), which makes it a maximum.
\begin{figure}[h]
	\centering
	\includegraphics[width=70mm]{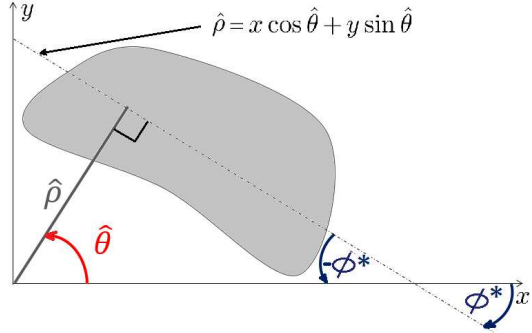}
	\caption{Relationship between the angle $\hat{\theta}$  and the orientation angle $\phi^*$}\label{Fig}	
\end{figure}

To finish, as shown in Fig.2, the relationship between the angle $\hat{\theta}$ that maximizes $\check{S_f}$ and the orientation angle $\phi^*$ of the object under investigation is given  by: $\phi^*=\hat{\theta}-\pi/2$. Let us now compute $\tan 2\phi^*$ as:  $\tan 2\phi^*=\tan(2\hat{\theta}-\pi)=\frac{\tan2\hat{\theta}-\tan\pi}{1+\tan2\hat{\theta}\tan \pi}=\tan2\hat{\theta}$. Thus 
\begin{equation}\label{phi_aste}
\tan 2\phi^*=\frac{2(m_{11}-m_{01}m_{10} )}{m_{20}- m_{02}+m_{01}^2 -m_{10}^2}
\end{equation}
This means that the angle $\phi$ computed with the geometric moments in \eqref{phi_bis} and $\phi^*$ the one computed after maximizing $\check{S_f}$, are the same. Indeed, the maximum of $\check{S_f}$ is a line of slop $\phi^*=\phi$ passing through $(x_c,y_c)$ as it is the case of the inertia main axis. It follows that there is an exact match of these two lines. 

Lastly, it is worth to note that the spatial coincidence between the line maximizing $\check{S_f}$ and the geometric moments-based principal axis holds only when the application of the SSRT provides one maximum. For this reason, $\sigma$ must be chosen to meet the aforementioned requirement. The easiest way to fulfil this condition, is to set $\sigma$ to the greatest distance separating two points of the binary pattern under investigation. We can see in Fig.3 the SSRT and its maxima. We can notice, in Fig.3.a, that when the parameter $\sigma$ is an appropriate one, there is only one SSRT maximum and one line corresponding to this maximum. When $\sigma$ is set to a smaller value, the SSRT space in Fig.b, shows three maxima, but none of them corresponds to the axis of inertia. Moreover, the error induced by the approximation used to prove the previously mentioned spatial coincidence is evaluated in the appendix.
\begin{figure}[h]
	\centering
	\includegraphics[width=145mm]{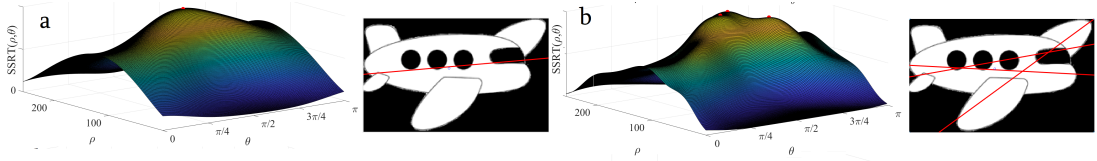}
	\caption{Example of SSRT maxima obtained with different scale parameters $\sigma$, where the SSRT exhibits one maximum (a) and  three maxima with smaller and inadequate scale parameter in (b). }\label{Fig}	
\end{figure}

\section{Example of application : Measuring central symmetry of binary objects}
The central symmetry is the symmetry with respect to a point and it is equivalent to a rotational symmetry of 2 folds \cite{Pei}. So, a binary object is said to be centrally symmetric if its rotated version around its centroid by $\pi$ is identical to it\cite{Pei}. It results that objects with $2n$ folds rotational symmetry, where $n\in \mathbb{N}$, are centrally symmetric. Let $f$ be a centrally symmetric binary image having its center as axes origin $O(0,0)$. It results that if a point of coordinates $(x,y)$ is rotated by $\pi$ around $O(0,0)$ then, its new coordinates are \small $\begin{pmatrix}
\cos\pi &-\sin\pi\\
\sin\pi&\cos\pi  
\end{pmatrix}$ $\begin{pmatrix}
x\\
y 
\end{pmatrix}$ =$\begin{pmatrix}
-x\\
-y 
\end{pmatrix}$\normalsize. Applying the central symmetry definition previously stated leads to $f(x,y)=f(-x,-y)$.
 \begin{figure}[h]
	\centering
	\includegraphics[width=110mm]{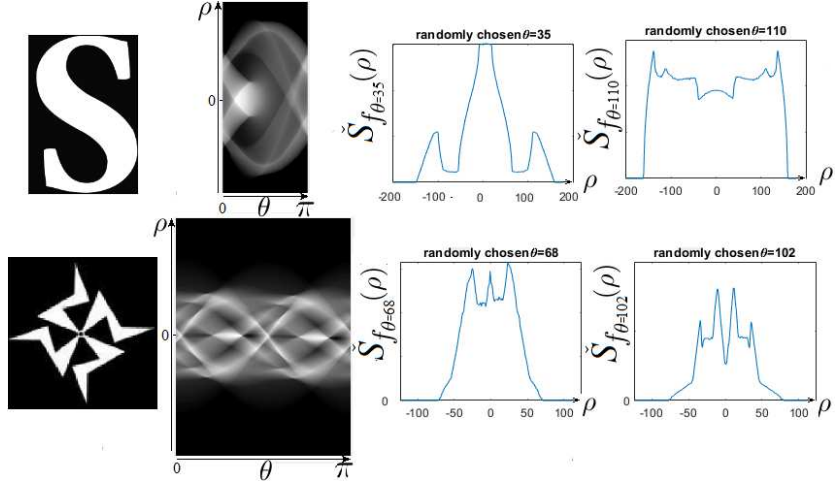}
	\caption{From left to right: Two centrally symmetric objects centred in the image, their SSRT space, two randomly chosen directions ($\theta$) for each case and their corresponding SSRT projections $\check{S_f}_\theta(\rho)$  }\label{Fig}	
\end{figure}

Looking at the SSRT $\check{S_f}$ of centrally symmetric objects centred in $f$ in Fig.4, we can notice the presence of reflection symmetry, which definition can be found in\cite{Nyg}, over the axe $\rho=0$ in the SSRT sinograms. This symmetry is the result of the reflection symmetry of the SSRT projections, $\check{S_f}_\theta(\rho)$, as seen in the same figure. In the following and for the sake of object central symmetry verification, we show some important SSRT characteristics.

\begin{proposition}
The SSRT is injective, which means that for two  $L^1(\mathbb{R}^2)$ measurable functions $f$ and $k$, $\check{S_f}=\check{S_k},\ \implies \ f=k $
\end{proposition}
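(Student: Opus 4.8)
The plan is to reduce the injectivity of the SSRT to the classical injectivity of the Radon transform, using the convolution identity \eqref{ssrt_conv}. Suppose $\check{S_f}=\check{S_k}$. Since $\check{S_f}=\check{R_f}\circledast g(\sigma)$ and $\check{S_k}=\check{R_k}\circledast g(\sigma)$, and the convolution acts only in the location variable $\rho$ (the kernel $g(\sigma)$ being one-dimensional) for each fixed direction $\theta$, I would rewrite the hypothesis as
\begin{equation}
\bigl(\check{R_f}-\check{R_k}\bigr)\circledast g(\sigma)=0
\end{equation}
for every $\theta$. The goal then splits into two independent facts: that convolution with the Gaussian $g(\sigma)$ is injective, and that the Radon transform itself is injective on $L^1(\mathbb{R}^2)$.

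For the first fact, I would fix $\theta$ and apply the one-dimensional Fourier transform in the variable $\rho$. Convolution turns into pointwise multiplication, so the displayed equation becomes
\begin{equation}
\widehat{\bigl(\check{R_f}-\check{R_k}\bigr)}(\omega,\theta)\,\hat{g}(\omega,\sigma)=0.
\end{equation}
The Fourier transform of a Gaussian is again a Gaussian, hence $\hat{g}(\omega,\sigma)$ is strictly positive and never vanishes. Dividing out $\hat{g}$ and inverting the Fourier transform yields $\check{R_f}(\cdot,\theta)=\check{R_k}(\cdot,\theta)$ for each $\theta$, that is $\check{R_f}=\check{R_k}$.

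Finally, the classical injectivity of the Radon transform on $L^1(\mathbb{R}^2)$ gives $f=k$, which closes the argument. The main obstacle, and the step demanding the most care, is making this deconvolution rigorous: one must check that for $f,k\in L^1(\mathbb{R}^2)$ the slice $\check{R_f}(\cdot,\theta)-\check{R_k}(\cdot,\theta)$ lies in a function class on which the one-dimensional Fourier transform is both well defined and injective (for instance, that the Radon slices are integrable for almost every $\theta$, via a projection-slice / Fubini argument), so that the passage ``divide by $\hat{g}$ and invert'' is legitimate rather than merely formal. Once the relevant function spaces are pinned down, the invocation of the known injectivity of the Radon transform is routine.
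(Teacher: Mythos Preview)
Your proposal is correct and follows essentially the same route as the paper: rewrite $\check{S_f}-\check{S_k}$ as $(\check{R_f}-\check{R_k})\circledast g(\sigma)=0$, take the Fourier transform in $\rho$, use that $\hat g$ never vanishes to conclude $\check{R_f}=\check{R_k}$, and then invoke the classical injectivity of the Radon transform. If anything, your version is slightly more careful than the paper's about isolating the $\rho$-variable convolution and flagging the $L^1$ integrability of the Radon slices needed to justify the Fourier step.
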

\begin{proof}
Let be $f$ and $k$ two measurable $L^1 (\mathbb{R}^2)$ functions. According to \eqref{ssrt_conv} we can write $\check{S_f}$=$\check{S_k}\implies\check{R_f}\circledast g(\sigma)=\check{R_k}\circledast g(\sigma)$. Linearity of convolution provides that $\check{R_f}\circledast g(\sigma)=\check{R_k}\circledast g(\sigma)\implies (\check{R_f}-\check{R_k})\circledast g(\sigma)=0$. Let $\mathcal{F}$ denote the Fourier transform, then $\mathcal{F}$ is injective on $L^1$. The amount $\check{R_f}-\check{R_k}$ being a $L^1$ measurable function as RT is $L^1$ function \cite{symp}, then  $(\check{R_f}-\check{R_k})\circledast g(\sigma)=0 \implies \mathcal{F}( (\check{R_f}-\check{R_k})\circledast g(\sigma))=0$. The last expression being equal to $\mathcal{F}(\check{R_f}-\check{R_k})\mathcal{F}(g)$ and $\mathcal{F}(g)$ being  a Gaussian function up to a multiplicative constant which cannot, therefore, be zero ($\mathcal{F}(g)>0$ everywhere), thus $\mathcal{F}(\check{R_f}-\check{R_k})\mathcal{F}(g)=0 \implies \ \mathcal{F}(\check{R_f}-\check{R_k})=0$. Since $\check{R_f}-\check{R_k}$ is $L^1 (\mathbb{R}^2)$ function, the inverse of $\mathcal{F}(\check{R_f}-\check{R_k})$ exists and is zero only if $\check{R_f}-\check{R_k}=0$. As $\check{R_f}-\check{R_k}=0 \implies\check{R_f}=\check{R_k}\implies f=k$, because RT is injective \cite{Sigurdur}, consequently $\check{S_f}=\check{S_k} \implies f=k$ and hence, the SSRT is injective. 
\end{proof}
\begin{proposition}
For each $L^1(\mathbb{R}^2)$  measurable function $f$, $f(x,y)=f(-x,-y)\iff\forall$ $\theta$$ \ \in [0$\ $\pi]$, $\forall$$\rho \ \in [-\rho_m +\rho_m]$  $\check{S_f}_\theta(\rho)=\check{S_f}_\theta(-\rho)$.
\end{proposition}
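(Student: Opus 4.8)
The plan is to prove both implications by exploiting a single change of variables, namely the point reflection $(x,y)\mapsto(-x,-y)$, whose Jacobian is $1$ and which maps $\mathbb{R}^2$ onto itself. The central observation I would establish first is that, for an arbitrary $L^1(\mathbb{R}^2)$ function $f$, the reflected function $k(x,y)=f(-x,-y)$ has SSRT projections related to those of $f$ by $\check{S_k}_\theta(\rho)=\check{S_f}_\theta(-\rho)$. Indeed, substituting $x\to -x$, $y\to -y$ in the defining integral \eqref{ssrt} sends the argument $x\cos\theta+y\sin\theta-\rho$ to $-(x\cos\theta+y\sin\theta+\rho)$, whose square is $(x\cos\theta+y\sin\theta+\rho)^2$, so that the kernel evaluated at location $\rho$ for $k$ turns into the kernel evaluated at location $-\rho$ for $f$. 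This one identity drives everything.

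For the forward implication, I would assume $f(x,y)=f(-x,-y)$, so that $k=f$. The observation then gives $\check{S_f}_\theta(\rho)=\check{S_k}_\theta(\rho)=\check{S_f}_\theta(-\rho)$ for every $\theta\in[0,\pi]$ and every $\rho$, which is precisely the asserted reflection symmetry of the projections about $\rho=0$.

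For the converse, I would assume $\check{S_f}_\theta(\rho)=\check{S_f}_\theta(-\rho)$ for all $\theta$ and $\rho$, and again set $k(x,y)=f(-x,-y)$. The observation yields $\check{S_k}_\theta(\rho)=\check{S_f}_\theta(-\rho)$, and the hypothesis rewrites the right-hand side as $\check{S_f}_\theta(\rho)$; hence $\check{S_k}=\check{S_f}$ identically over the whole SSRT domain. Invoking the injectivity of the SSRT established in Proposition 1, this forces $k=f$, that is $f(x,y)=f(-x,-y)$, which closes the equivalence.

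The routine part here is the sign-flip substitution; the step that actually requires the earlier machinery is the converse. Equality of the smeared projections $\check{S_f}_\theta(\rho)=\check{S_f}_\theta(-\rho)$ cannot be undone pointwise to recover $f$ directly, since the Gaussian kernel blends the contributions along each line; the clean route is to recast the projection symmetry as the equality of two full transforms $\check{S_k}=\check{S_f}$ and then quote injectivity. I therefore expect the main obstacle to be conceptual rather than computational: recognizing that the correct auxiliary object is the reflected image $k=f(-\cdot,-\cdot)$ and that the problem should be reduced to Proposition 1, rather than attempting a direct Fourier or moment inversion on the projections themselves.
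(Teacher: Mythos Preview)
Your proposal is correct and follows essentially the same route as the paper: both directions rest on the substitution $(x,y)\mapsto(-x,-y)$ together with the kernel identity $G_{\theta,\rho,\sigma}(-x,-y)=G_{\theta,-\rho,\sigma}(x,y)$, and the converse is closed by invoking the injectivity of the SSRT from the preceding proposition. Your packaging is slightly cleaner in that you isolate the identity $\check{S_k}_\theta(\rho)=\check{S_f}_\theta(-\rho)$ for $k(x,y)=f(-x,-y)$ once and reuse it, whereas the paper redoes the change of variables in each direction, but the substance is identical.
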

 
\begin{proof}
1/ We start with the necessary condition. Let $f$ be a $L^1(\mathbb{R}^2)$ measurable function and let $\check{S_f}_\theta(\rho)$ be the SSRT projection of direction $\theta$. The projection $\check{S_f}_\theta(\rho)$ is a curve varying with the variable $\rho$, as seen in Fig.4. Thus, for a given $\theta$, $f(x,y)=f(-x,-y)\implies\check{S_f}_\theta(\rho)= \int_{\cal X}\int_{\cal Y}f(x,y)G_{\theta,\rho,\sigma}(x,y)dxdy=\int_{\cal X}\int_{\cal Y}f(-x,-y)G_{\theta,\rho,\sigma}(x,y)dxdy$, where $G_{\theta,\rho,\sigma}(x,y)$ is the directional Gaussian used in \eqref{ssrt}. By operating the variable change $x=-u$ and $y=-v$ and provided that $-u$ and $-v$ vary in the same support as $u$ and $v$ then, $\check{S_f}_\theta(\rho)= \int_{\cal U}\int_{\cal V}f(u,v)G_{\theta,\rho,\sigma}(-u,-v)dudv$. It is easy to check that $f(x,y)=f(-x,-y)\implies G_{\theta,\rho,\sigma}(-u,-v)=G_{\theta,-\rho,\sigma}(u,v)$. Consequently, we can write that $\check{S_f}_\theta(\rho)=\int_{\cal U}\int_{\cal V}f(u,v)G_{\theta,-\rho,\sigma}(u,v)dudv=\check{S_f}_\theta(-\rho)$. It results that $f(x,y)=f(-x,-y)\implies \check{S_f}_\theta(\rho)=\check{S_f}_\theta(-\rho)$, regardless $\theta$ and $\rho$.

2/ Considering now the sufficient condition. Let $f$ be a $L^1(\mathbb{R})$  measurable function. So, $\check{S_f}_\theta(\rho)=\check{S_f}_\theta(-\rho), \forall (\theta,\rho) \implies \int_{\cal X} \int_{\cal Y}f(x,y)G_{\theta,\rho,\sigma}(x,y)dxdy=\int_{\cal X} \int_{\cal Y}f(x,y)G_{\theta,-\rho,\sigma}(x,y)dxdy, \forall (\theta,\rho)$. Operating the following variable change $x=-u$ and $y=-v$ in the expression of $\check{S_f}_\theta(\rho)$ yields, $\check{S_f}_\theta(\rho)=\int_{\cal U} \int_{\cal V}f(-u,-v)G_{\theta,\rho,\sigma}(-u,-v)dudv$. Knowing that $G_{\theta,\rho,\sigma}(-u,-v)=G_{\theta,-\rho,\sigma}(u,v)$ provides $\int_{\cal U} \int_{\cal V}f(-u,-v)G_{\theta,-\rho,\sigma}(u,v)dudv$=
$\int_{\cal X} \int_{\cal Y}f(x,y)G_{\theta,-\rho,\sigma}(x,y)dxdy,  \forall (\theta,\rho)$. The function $f$ being a $L^1(\mathbb{R}^2)$  measurable  function and the SSRT being injective, as previously demonstrated, makes the last equality true only if $f(x,y)=f(-x,-y)$. Hence, $\check{S_f}_\theta(\rho)=\check{S_f}_\theta(-\rho),  \forall (\theta,\rho) \implies f(x,y)=f(-x,-y). $

\small From 1/ and 2/ $f(x,y)=f(-x,-y) \iff \forall\theta \in [0$\ $\pi]$, $\forall  \rho \in [-\rho_m +\rho_m]\check{S_f}_\theta(\rho)=\check{S_f}_\theta(-\rho)$.\normalsize
\end{proof}	

At this stage, the problem of central symmetry measurement is going to be moved from a 2D domain (the image $f$) to a 1D domain (the projections $\check{S_f}_{\theta}(\rho)$).To begin, let us consider that the binary object is centred in the image $f$ around its centroid $(x_c,y_c)=(0,0)$. To this end, we exploit, the \textbf{proposition 4.2} that stipulates that if for each direction $\theta$, the projection $\check{S_f}_{\theta}(\rho)$, where $\rho\in[-\rho_{max} \ \rho_{max}]$, is reflectionally symmetric over the axis $\rho=0$, then $f(x,y)=f(-x,-y)$. To the aim of evaluating the reflection symmetry of the projection $\check{S_f}_{\theta}(\rho)$, the latter is compared to its reflected form over the axis $\rho=0$, noted $\check{S_f}_\theta^{rf}$, which means that $\check{S_f}_\theta^{rf}(\rho)=\check{S_f}_\theta(-\rho), \ \forall \rho$. This comparison is done via a difference measure carried out on the projection and its reflected version. If this measure is null then the two projections are the same and consequently $\check{S_f}_{\theta}(\rho)$ is reflectionally symmetric. Let us call this measure $D$ computed as the ratio of the mean value of the $m\%$ highest values of $|\check{S_f}_{\theta}(\rho)-\check{S_f}_\theta^{rf}|$ and $\check{S_f}_{\theta}(\rho)$ maximum value $M_{S_f}$. Let us designate the set consisting in the $m\%$ highest values of $|\check{S_f}_{\theta}-\check{S_f}_\theta^{rf}|$ by $d_m$, and its mean value by $\bar{d_m}$,  then $D$ is computed as 
\begin{equation}\label{D}
D(\check{S_f}_{\theta},\check{S_f}_\theta^{rf})=\frac{\bar{d_m}}{M_{S_f}}
\end{equation}
When the centrally symmetric object is centred in the image $f$ and the origin of axes $(0,0)$ is the image center, then $f(x,y)=f(-x,-y)$ and the center of orientation coincides with the object centroid $(x_c,y_c)=(0,0)$. This leads to reflection symmetries of SSRT projections over the axes  $\rho^{(x_c,y_c)}_{\theta}=x_c\cos\theta+y_c\sin\theta=0, \ \forall \ \theta$. However, objects are rarely centred in the image. Hopefully and thanks to the shifting property of the SSRT, when the object is shifted in the image, each projection $\check{S_f}_{\theta}$ undergoes a shift of amount equals to $\rho^{(x_c,y_c)}_{\theta}=x_c\cos\theta+y_c\sin\theta$ \cite{Nacer}. Consequently, a non-centred centrally symmetric object SSRT projections shapes do not change compared to the ones of the same object centred in the image. in fact, parts of SSRT projections of the mentioned object keep, fortunately, a reflection symmetry over the axes $\rho^{(x_c,y_c)}_{\theta}$, as we can see in Fig.5(a). Consequently, a circular shift operation on each projection of an amount equals to $-\rho^{(x_c,y_c)}_{\theta}=-(x_c\cos\theta+y_c\sin\theta)$ makes all projections symmetric over the axis $\rho=0$ as illustrated, in Fig.5(b). Thus, to apply the proposed reflection symmetry verification method on SSRT projections, the latter must be subjected to such translations to make them reflectionally symmetric over the axis $\rho=0$ if the object has central symmetry.
 \begin{figure}[h]
	\centering
	\includegraphics[width=110mm]{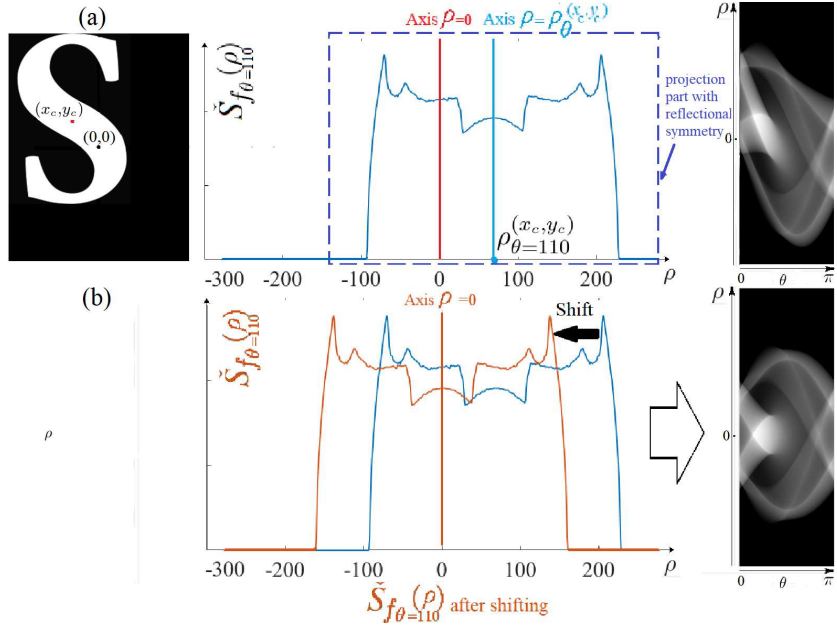}
	\caption{From left to right (a): Centrally symmetric object not centred in the image, a randomly chosen direction projection with mirror symmetry of a part of the projection over the axis $\rho^{x_c,y_c}_{\theta}$ and its SSRT sinogram. (b): The same projection (in blue) and its shifted version (orange colored) with mirror symmetry of the projection and the resulting SSRT sinogram after subjecting all projections to shift operations.}\label{Fig}	
\end{figure}

\subsection{Sampling the projections for central symmetry measurement}

Central symmetry measurement requires the exploitation of a number of $\check{S_f}$ projections. We provide in the following how we have chosen this number. 
\begin{proposition}
	For each $L^1(\mathbb{R}^2)$ measurable function $f$, such as $f(x,y)=f(-x,-y)$, then the reconstructed $\hat{f}$ with $\check{S_f}$ fulfils $\hat{f}(x,y)=\hat{f}(-x,-y)$,  whatever the number of used projections 
\end{proposition}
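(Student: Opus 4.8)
The plan is to reduce the central symmetry of the reconstruction $\hat{f}$ to the reflection symmetry of the SSRT projections already established, and then to track how this $\rho$-evenness propagates through the backprojection used for reconstruction. First I would invoke \textbf{Proposition 4.2}: since $f(x,y)=f(-x,-y)$, every projection obeys $\check{S_f}_\theta(\rho)=\check{S_f}_\theta(-\rho)$, so each $\check{S_f}_\theta$ is an \emph{even} function of $\rho$ for every direction $\theta$. This is the single structural fact that the whole argument will exploit.

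Next I would write the reconstruction explicitly as a (filtered) backprojection over the chosen finite set of directions $\{\theta_1,\dots,\theta_K\}$, in the form $\hat{f}(x,y)=\sum_{k=1}^{K} Q_{\theta_k}(x\cos\theta_k+y\sin\theta_k)$, where $Q_\theta$ denotes the projection $\check{S_f}_\theta$ after the one-dimensional reconstruction filtering in the $\rho$ variable. The key observation is that evaluating $\hat{f}$ at $(-x,-y)$ merely replaces each argument $x\cos\theta_k+y\sin\theta_k$ by its negative: $\hat{f}(-x,-y)=\sum_{k=1}^{K} Q_{\theta_k}\!\left(-(x\cos\theta_k+y\sin\theta_k)\right)$. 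Hence, if each $Q_{\theta_k}$ is even in its argument, the two sums agree term by term and $\hat{f}(-x,-y)=\hat{f}(x,y)$. I would also stress here that this matching is termwise, so it requires neither a full angular set nor any completeness of the sampling; this is exactly why the conclusion holds ``whatever the number of used projections.''

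The remaining step, which I expect to be the main obstacle, is to show that the filtering stage preserves evenness, i.e. that $Q_\theta$ inherits the $\rho$-evenness of $\check{S_f}_\theta$. Because the reconstruction filter acts by convolution in $\rho$, it suffices that its kernel be an even function of $\rho$. For SSRT reconstruction this effective kernel combines the usual ramp filter with the deconvolution of the Gaussian $g(\sigma)$ coming from $\check{S_f}=\check{R_f}\circledast g(\sigma)$; the ramp is even and $g(\sigma)$ is even (so its inverse filter corresponds to an even kernel as well), whence the overall filter is even. The convolution of the even projection $\check{S_f}_\theta$ with an even kernel is again even, giving $Q_\theta(\rho)=Q_\theta(-\rho)$ as required. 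With this evenness secured, the termwise backprojection argument of the previous paragraph closes the proof, establishing $\hat{f}(x,y)=\hat{f}(-x,-y)$ for any number of projections.
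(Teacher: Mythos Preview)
Your argument is correct and rests on the same structural fact as the paper's --- the $\rho$-evenness of every projection coming from Proposition~4.2 --- but you organise the reconstruction step differently. The paper first forms the \emph{unfiltered} SSRT backprojection $b_S(x,y)=\tfrac{1}{M}\sum_i \check{S_f}(x\cos\theta_i+y\sin\theta_i,\theta_i)$, obtains $b_S(x,y)=b_S(-x,-y)$ directly from the evenness of the projections, and only then passes to the 2D Fourier domain, using $b_S=\hat f\circledast h\circledast g$ with $h(x,y)=1/\sqrt{x^2+y^2}$ to deduce that $\hat F=B_S/(HG)$ is real and even, hence $\hat f$ is centrally symmetric. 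You instead apply the 1D reconstruction filter \emph{before} backprojecting, argue that the ramp kernel and the Gaussian (de)convolution are even so the filtered profiles $Q_{\theta_k}$ remain even, and then conclude termwise. Both routes are standard equivalent orderings of filtered backprojection; yours is a little more elementary because it never leaves 1D convolutions, while the paper's makes the link $b_R=\hat f\circledast h$ and the role of the Gaussian in \eqref{ssrt_conv} more explicit.
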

\begin{proof}
	Let $b_R$ be the back-projection of $\check{R_f}$. It is given by \cite{Sigurdur}, \small $b_R(x,y)=1/\pi\int_0^ \pi \check{R_f}(x\cos\theta+y\sin\theta,\theta)\ d\theta=1/\pi\int_0^\pi \check{R_f}(\rho,\theta)\ d\theta$\normalsize, as $x\cos\theta+y\sin\theta$ is nothing but $\rho$. With a finite number of equispaced views, $b_R(x,y)$ can be
	approximated by the summation $b_R(x,y)=1/M \sum_i^M \check{R_f}(\rho,\theta_i)$ \cite{Ling}, where $M$ is the number of the used projections. Similarly to $b_R$, let $b_S$ be the back-projection for a finite number of projection $M$ of $\check{S_f}$. Then, $b_S(x,y)=1/M\sum_i^M \check{S_f}(\rho,\theta_i)$. As $\check{S_f}(\rho,\theta_i)$ expresses the projection, then $\check{S_f}(\rho,\theta_i)=\check{S_f}_{\theta_i}(\rho)$. Moreover, as $f(x,y)=f(-x,-y)\iff \check{S_f}_{\theta_i}(\rho)=\check{S_f}_{\theta_i}(-\rho) \ \forall \ \rho \ \theta_i$, then it follows that $\check{S_f}(x\cos\theta_i+y\sin\theta_i,\theta_i)=\check{S_f}(-x\cos\theta_i-y\sin\theta_i,\theta_i)$ whatever $\theta_i$ and consequently, $b_S(x,y)=b_S(-x,-y)$ whatever $M$. Moreover, $b_S$ can be written by introducing the relationship between $\check{R_f}$ and $\check{S_f}$ in \eqref{ssrt_conv}, i.e $b_S(x,y)=1/M \sum_i^M \sum_r g(r-\rho)\check{R_f}(\rho,\theta_i)=\sum_r g(r-\rho)1/M \sum_i^M \check{R_f}(\rho,\theta_i)$. We have, therefore, $b_S(x,y)=\sum_r g(r-\rho)b(\rho,\theta_i)$. Consequently, $b_S=b_R\circledast g$. Furthermore, $b_R=\hat{f}\circledast h$, where $h=\frac{1}{\sqrt{x^2+y^2}}$ \cite{Sigurdur}. It follows that, $b_S=\hat{f}\circledast h\circledast g$ and therefore, $\hat{F}(w)=B(w)_S/\left(H(w)G(w)\right)$ for $G(w)\ne 0$ and $H(w)\ne 0$, with $B_S(w)$, $H(w)$, $\hat{F}(w)$ and $G(w)$ the Fourier Transform of $b_S$, $h$, $\hat f$ and $g$, respectively. In addition, $B(w)_S$ and $H(w)$ are real valued function and even in terms of $w$ because $h$ is centrally symmetric, by its expression and so is  $b_S$ whatever the projections number $M$. Concerning $G(w)$ it is equal to $e^{-2\pi^2\sigma^2w^2}$ as shown in \cite{Nacer} and is, therefore, real valued function and even in terms of $w$. Consequently, $\hat F(w)$ is a real even function in terms of $w$ because the product/quotient of two real even functions yields a real even function leading to $\hat f(x,y)=\hat f(-x,-y)$ and this holds whatever $M$.
\end{proof}

\begin{algorithm}
	\DontPrintSemicolon
	\SetKwFor{For}{for}{do}{end~for}
	\KwIn{Binary image $f$, the scale $ \sigma_{sym}$, threshold $\epsilon$, $\rho$ and $\theta$ steps} 
	
	\KwOut{$Sym$}
	Compute $\sigma$, $\check{R_f}$, $g(\sigma)$, $\check{S_f}$ with \eqref{ssrt_conv} and  $(\hat \theta, \hat \rho)= \underset{(\theta , \rho ) }{\mathrm{argmax}} \ \check{S_f} $
	
	Compute $g(\sigma_{sym})$, $\check{S_f}$, consider $\check{S_f}_{\theta_1}$, shift if necessary and reflect it
	
	Compute $D$ \eqref{D} for $\check{S_f}_{\theta_1}$
	
	\If { $ D \leq \epsilon$}{
		
		Consider $\check{S_f}_{\theta_2}$, shift if necessary and reflect it, compute $D$ \eqref{D} for $\check{S_f}_{\theta_2}$
		
		\If { $ D \leq \epsilon$}{
			
			Consider $\check{S_f}_{\theta_3}$, shift if necessary and reflect it, compute  $D$\eqref{D} for $\check{S_f}_{\theta_3}$
			
			\If { $ D \leq \epsilon$}{
				
				$Sym=1$
				
			}
			\Else {
				$Sym=0$
			}
			
		}
		\Else {
			$Sym=0$
		}
		
	}
	\Else {
		$Sym=0$
	}
	\caption{Central symmetry checking}
	\label{alg:alg1}
\end{algorithm} 
It results that using one reflectionnaly symmetric projection to construct $\hat f$ guarantees the central symmetry of the latter, but does not guarantee, unfortunately, the central symmetry of $f$. Indeed, if this projection of which orientation, says $\gamma$, corresponds to the orientation of an existing reflection symmetry in $f$, which also produces a reflection symmetry in the projection $\check{R_f}_{\gamma}$, as seen in \cite{Nyg} and subsequently in $\check{S_f}_{\gamma}(\rho)$, then, $\hat f$ will be centrally symmetric even if $f$ has only one reflection symmetry. To avoid this situation, we propose to sample the SSRT space into three equispaced projections, $\check{S_f}_\theta$, $\check{S_f}_{\theta+\pi/3}$ and $\check{S_f}_{\theta+2\pi/3}$. Concerning the choice of the angle $\theta$, it is going to be related the orientation of principal axes of inertia. In fact, these axes are used to characterize dispersion of bodies by representing the spatial distribution of their
mass \cite{Liu}. Furthermore, the axis of reflectional symmetry of a flat object is going to be one of its inertia axes. So, to avoid the axis of reflection symmetry orientation, if it exists, we propose to deviate from such orientation by taking $\theta$ equals to $\hat \theta$ computed in Sect.3, added to $\Delta \theta$. So, if we call $\theta_1=\hat \theta+\Delta \theta$, $\theta_2=\theta_1 +\pi/3$ and  $\theta_2=\theta_1 +2\pi/3$, then the symmetry is ascertained if the measure $D$ computed for the three orientation is below a threshold $\epsilon$.

It is worth to note, that the SSRT scale parameter $\sigma_{sym}$ used to compute symmetry is not the same as the one used to compute the axis of inertia. In fact, the last one has as purpose to produce smooth projections curves which makes the method less sensitive to turbulences that may be caused by non-smooth object edges or by noise.
The operations related to the central symmetry measurement can be summarized in Algorithm 1, where the output is a boolean variable $Sym$ equals to 1 if the object is centrally symmetric and to 0 if not.

\section{Experiments}
In order to evaluate the inertia axes estimation with the SSRT, we compare inertia axes obtained with the geometric moments and those computed with the SSRT maxima. In order to guarantee the obtention of one maximum on the SSRT, the scale parameter $\sigma$ should be tuned correctly. This condition is fulfilled by setting it to the biggest distance separating two processed object points, for binary images. For gray-scale ones, set $\sigma$ to the diagonal length of the image has provided the desired results. We can see in Fig.6 examples of patterns extracted from the dataset in \cite{data}, on which are depicted red lines corresponding to SSRT maxima related to each image and the ones in dashed cyan representing the geometric moments-based principal inertia axes. We can see that, in each case, the two lines match. The same remarks can be done for real images in Fig.7, where the geometric moments-based axes of inertia and SSRT-based axes of inertia have been computed directly on gray scale images. We can see, in this case, that the two axes coincide. We can remark, however, that in some cases (white fox, swans, white lion) there is a slight deviation of the SSRT line from the inertia axis. This is due to the fact that SSRT is estimated on a sampled space. This makes $\rho$ and $\theta$ taking values only in the SSRT discrete parameters space, unlike the axis of inertia computed with the geometric moments of which slope and position are freed from such constraints 
\begin{figure}[h]
	\centering
	\includegraphics[width=120mm]{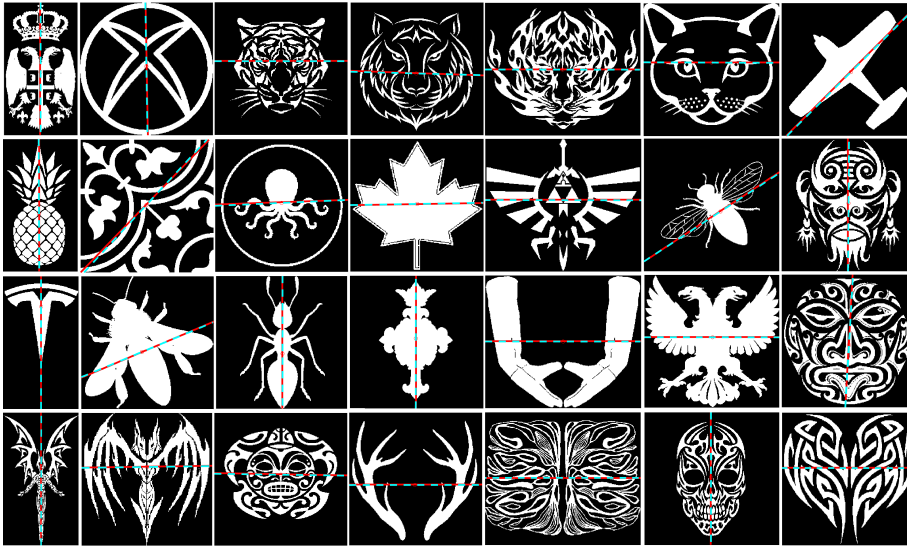}
	\caption{Geometric moments-based main axis of inertia in cyan dashed line and SSRT maximum-based line in red, computed on images}\label{Fig}	
\end{figure}
\begin{figure}[h]
	\centering
	\includegraphics[width=110mm]{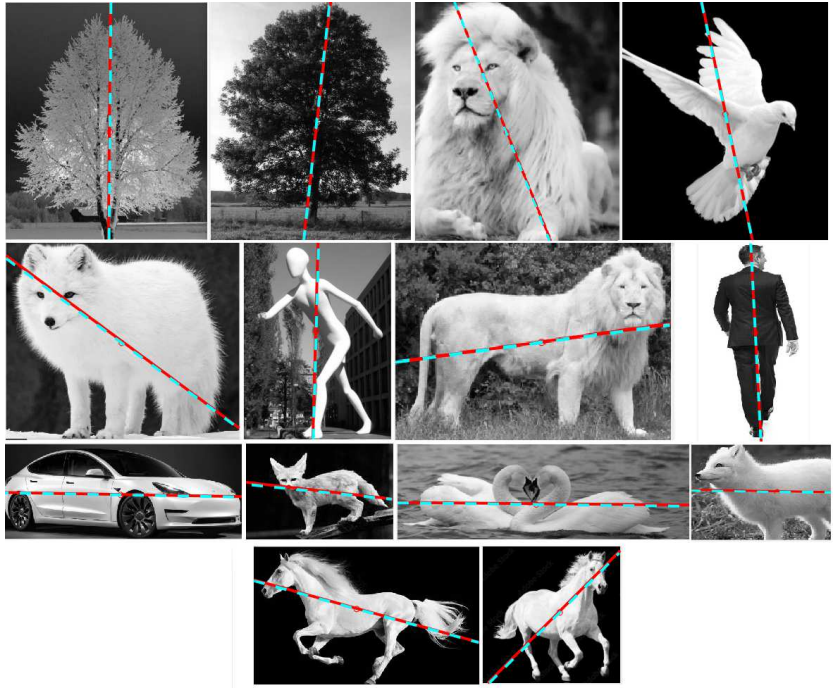}
	\caption{Geometric moments-based main axis of inertia and SSRT based axis of inertia computed on real images}\label{Fig}	
\end{figure}
 \begin{figure}[h]
	\centering
	\includegraphics[width=100mm]{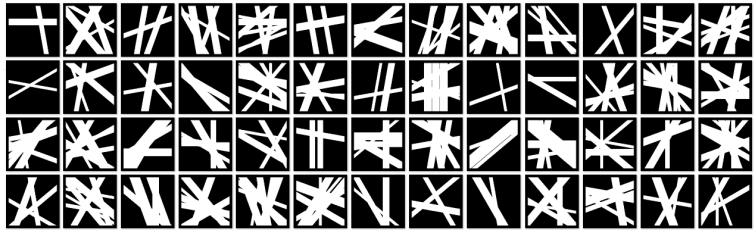}
	\caption{A sample of the randomly generated images}\label{Fig}	
\end{figure}
\begin{figure}[h]
	\centering
	\includegraphics[width=90mm]{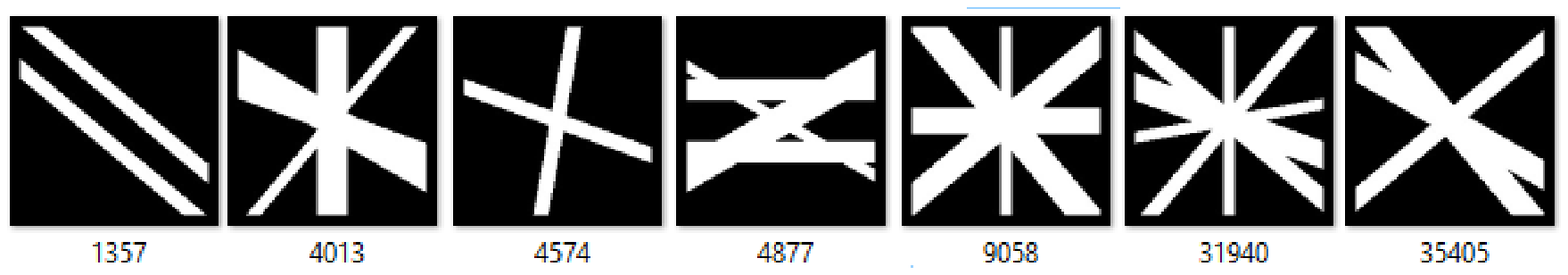}
	\caption{A sample of images with centrally symmetric patterns extracted from generated dataset.}\label{Fig}	
\end{figure}

Furthermore, to test our proposed central symmetry measurement method, a set of 100 000 images, composed of generated bars of randomly chosen numbers, widths, orientations and positions, are used. A sample of this dataset is depicted in Fig.8. On each image, inertia axis is computed with the SSRT maximum, followed by computing the SSRT with $\sigma_{sym}$ and finally, three projections are selected according to the inertia axis orientation and then shifted if necessary, as seen in Set.4. We point out here, that three parameters have to be set. The first one is $\epsilon$, the difference measure threshold that defines how the SSRT projection and its reflected version are far from each other, the second one is $\Delta \theta$ the angle added to deviate from principal direction of inertia and finally $\sigma_{sym}$, the scale of the SSRT for symmetry measurement, which should be a trade-off between projection smoothing and the fidelity guarantee of object representation. These three parameters are chosen to be equal to 0.03 for $\epsilon$, as images are composed of perfect binary bars and we want the SSRT projection and its reflection form comparison to be an accurate one, to 5° for the angle deviation $\Delta \theta$ which seemed to be adequate for the experiments, and to 1 for the scale $\sigma_{sym}$. Applying our method on this collection of dataset, has divided it into a group of 730 images containing centrally symmetric objects and another one consisting of 99270 images. A sample of the first group is given in Fig.9. To evaluate the performance of our method, an evaluation procedure is performed on its outcomes by comparing them to a ground truth. The latter is obtained by dividing the dataset into two groups, on the basis of the direct application of the central symmetry definition of an object, given in Set 4. This division is performed as follows; for every image $f$ in the dataset, another one, $f_r$, is generated by rotating the object, noted $f_O$ around its centroid by $\pi$. If the outcome of the measure $E_m=area(f_O^{sub}) /area(f_O)$, where $f_O^{sub}$ is the object created by the operation $|f-f_r|$, is less then a threshold $t$, where $t$ is set to 0.1, then the object in $f$ is said to be centrally symmetric and assigned to central symmetry group. It is worth mentioning that the direct application of object central symmetry definition to divide the dataset and obtain the ground truth is made possible by the fact that the latter is composed of perfect binary structures. Afterwards, the ground truth so obtained, is compared to the outcomes of our method application. Hence, all images found to be centrally symmetric by our method have been also approved as centrally symmetric by the reference dataset division, except two (2) of them while 34 images detected as non-centrally images by our method have been found to be centrally symmetric by the reference division. If we look in Fig.10 (b), we will see that images "erroneously" assigned by our method to centrally symmetric group, have a missed threadlike part inside their objects, which makes them not completely centrally symmetric even if their overall shape seem to be coarsely symmetric. However, this could just express insensitivity to impulse noise of the proposed method as we can see later. Regarding the images affected "erroneously" to non-centrally symmetric group as images in Fig.10 (a), we can see visually that the are not perfectly centrally symmetric. In turns out that, assigning an image to a particular group is handled by $\epsilon$ tuning. Indeed, the more $\epsilon$ is close to zero the more the affectation operation is rigorous. At the light of these numerical results, and if we consider the processes of dividing the dataset as a binary classification, then the precision rate $A$ of our classification where $A=\frac{TP}{TP+FP}$ with $TP$ the true positive number i.e. the number of images with centrally symmetric objects that have been correctly assigned as such and $FP$ are the false negative number which is the number of images that have been erroneously assigned to the centrally symmetric group. Then, $A=\frac{728}{730}=0.997$ which represents a satisfying result. It is important to mention that the precision $A$ could slightly change by increasing $\epsilon$ to make the symmetry measurement more flexible or decreasing $t$ to make the ground truth creation more severe. To finish with this dataset, corrupting centrally symmetric images by impulse noise with density equal to 0.1, has allowed to test the robustness of the method against noise. To face noise in images we have increased $\sigma_{sym}$ to 10 and observe the outcomes. Corrupting with noise images composed of filiform bars have subjected the SSRT projections to important modifications leading to partial lost of their reflection symmetries as shown on Fig.11, and therefore, the method fails in detecting their central symmetries in such images, unlike the other ones. It follows that the way we compare SSRT projections and their reflected versions should be adjusted to face projections behaviour when the image is subjected to noise, and then enhance robustness against noise for patterns of all possible shapes. We can see in Fig.12 the example of images in Fig.9 checked as centrally symmetric even with impulse noise.
\begin{figure}[h]
	\centering
	\includegraphics[width=143mm]{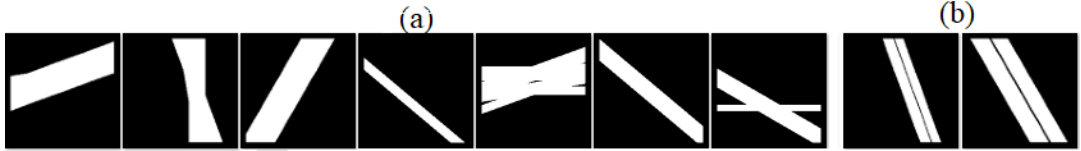}
	\caption{Results of evaluation. (a): Images erroneously assigned to non-centrally symmetric group (b): Images erroneously assigned to centrally symmetric group}\label{Fig}	
\end{figure}
\begin{figure}[h]
		\centering
	\includegraphics[width=140mm]{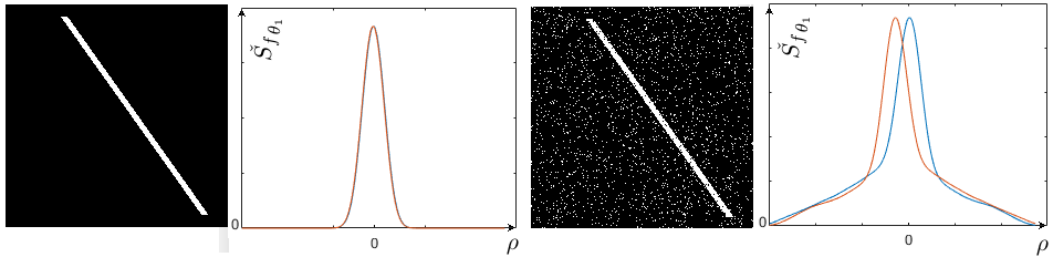}
	\caption{From left to right: Filifom bar, complete match between its SSRT projection $\check{S_f}_{\theta_1}$ and its reflected version $\check{S_f}^{rf}_{\theta_1}$, noisy filiform bar, mismatch between $\check{S_f}_{\theta_1}$ in bleu and its reflected version $\check{S_f}^{rf}_{\theta_1}$ in red}\label{Fig}	
\end{figure}
\begin{figure}[h]
	\centering
	\includegraphics[width=140mm]{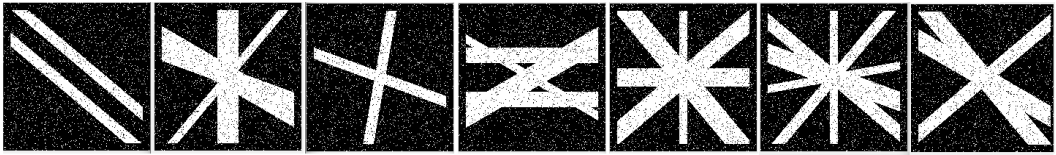}
	\caption{Example of centrally symmetric images corrupted with salt \& pepper noise and checked with the proposed method as centrally symmetric ones }\label{Fig}	
\end{figure}

The last experiment is carried out on the dataset in \cite{data}. This dataset is composed of 100 images containing patterns with several rotational symmetries and is displayed in Fig.13. We recall that an $n$ folds rotational symmetric object is an object that looks the same after being subjected to rotation around its centroid by $2\pi/n$, where $n\in \mathbb{N}$ \cite{Pei}. Hence, in this dataset, some of images are $2n$ folds rotational symmetric and hence, centrally symmetric and the others are $2n+1$ folds rotational symmetric and then, non centrally symmetric. Furthermore, these images are not binary ones, they must be thresholded. Consequently, here, $\epsilon$ is increased to 0.1 to deal with irregularities that may appear in the object after thresholding. From the 100 images belonging to the mentioned dataset, the proposed method has permitted to distinguish, successfully, between the centrally symmetric ( $2n$ folds rotational symmetric) and non-centrally symmetric images ($2n+1$ folds rotational symmetric). The latter are depicted in Fig.14.

\begin{figure}[h]
	\centering
	\includegraphics[width=140mm]{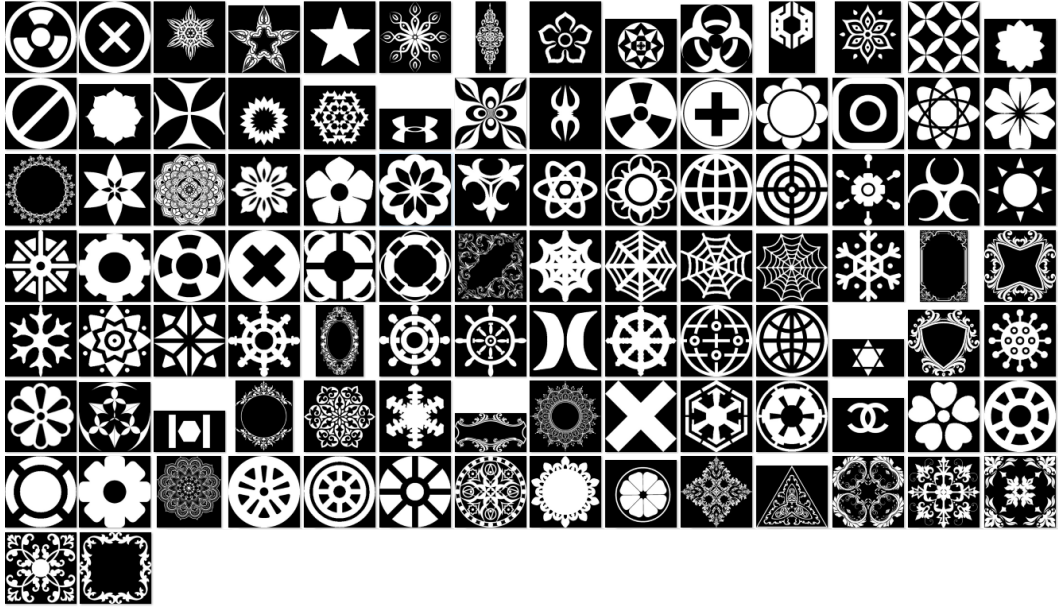}
	\caption{Images of the dataset in \cite{data}}\label{Fig}	
\end{figure}
\begin{figure}[h]
	\centering
	\includegraphics[width=140mm]{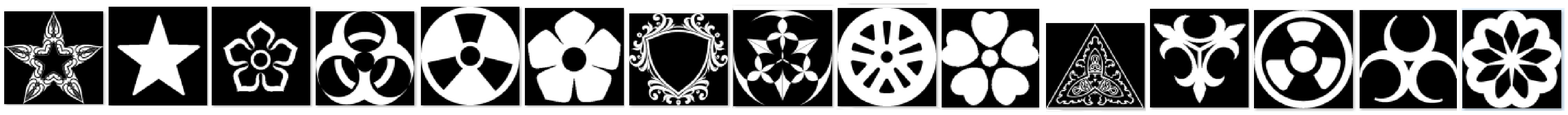}
	\caption{Extracted images consisting in non-centrally symmetric objects of Fig.12}\label{Fig}	
\end{figure}
  \section{Conclusion}
In this paper, we have proposed to investigate the ability of the Scale Space Radon Transform to provide the main axis of inertia by means of its maximum, when the corresponding scale is chosen correctly. Mathematical expressions of the parameters of the SSRT for a line, obtained by derivation, have shown to give the same expressions of the line parameters of the main axis of inertia computed with the geometric moments. Furthermore, experimental results have shown that axes of inertia and the SSRT maxima-based lines computed on gray scale and binary images are almost overlapped, which indicates that they match. In addition, the proposed central symmetry measurement method tested on two datasets, has shown its effectiveness, by permitting, therefore, to pick out the centrally symmetric objects from the other ones. However, investigating a more effective difference measure to compare the SSRT projections and their reflection versions and increase, consequently, the robustness to noise will be appreciated. Moreover, the application of the method in 3D, will be the subject of future works to detect automatically Centro-symmetric structures in volumes.

\section{Appendix}
The exploited approximation of $g(z)=e^{-z^2/2\sigma^2}$ with Maclaurin serie being limited to $n=1$, we compute here the error of this approximation in terms of serie remainder. So, let us consider the serie remainder for n=1, $r_{1}$ which is equal to $\sum^\infty_{n=2}(-1)^n\frac{1}{n!}\left(\frac{z^2}{2\sigma^2}\right)^n$. It is known for alternating serie $\sum_{0}^{\infty} (-1)^n a_n$ that fulfils the convergence conditions (1) and (2) stated in Set.3, that the remainder $r_n$ satisfies $|r_n|\leq a_{n+1}$. Consequently, $a_{n+1}$ being equal to $\frac{1}{2}(\frac{z^2}{2\sigma^2})^2$ for n=1, it turns out that $|r_1|\leq \frac{1}{2}(\frac{z^2}{2\sigma^2})^2$. The surface in Fig.\ref{Fig15} shows the evolution of $\frac{1}{2}(\frac{z^2}{2\sigma^2})^2$ with respect to $z$ and $\sigma$. The approximation error being upper bounded by $\frac{1}{2}(\frac{z^2}{2\sigma^2})^2$, it drops considerably and quickly, as its upper bound does, when $\sigma$ increases. Since the scale space parameter $\sigma$ is directly related to the object size or to the image size, as previously seen, its amount exceeds for sure 1, and will certainly rise to over 25, the maximum value of this parameter in the remainder upper bound evolution figure. For example for $z=0.25$ and $\sigma=20$, $\frac{1}{2}(\frac{z^2}{2\sigma^2})^2=3\times 10^{-9}$, which ascertains the good approximation we have chosen.
\begin{figure}[h]
	\centering
	\includegraphics[width=145mm]{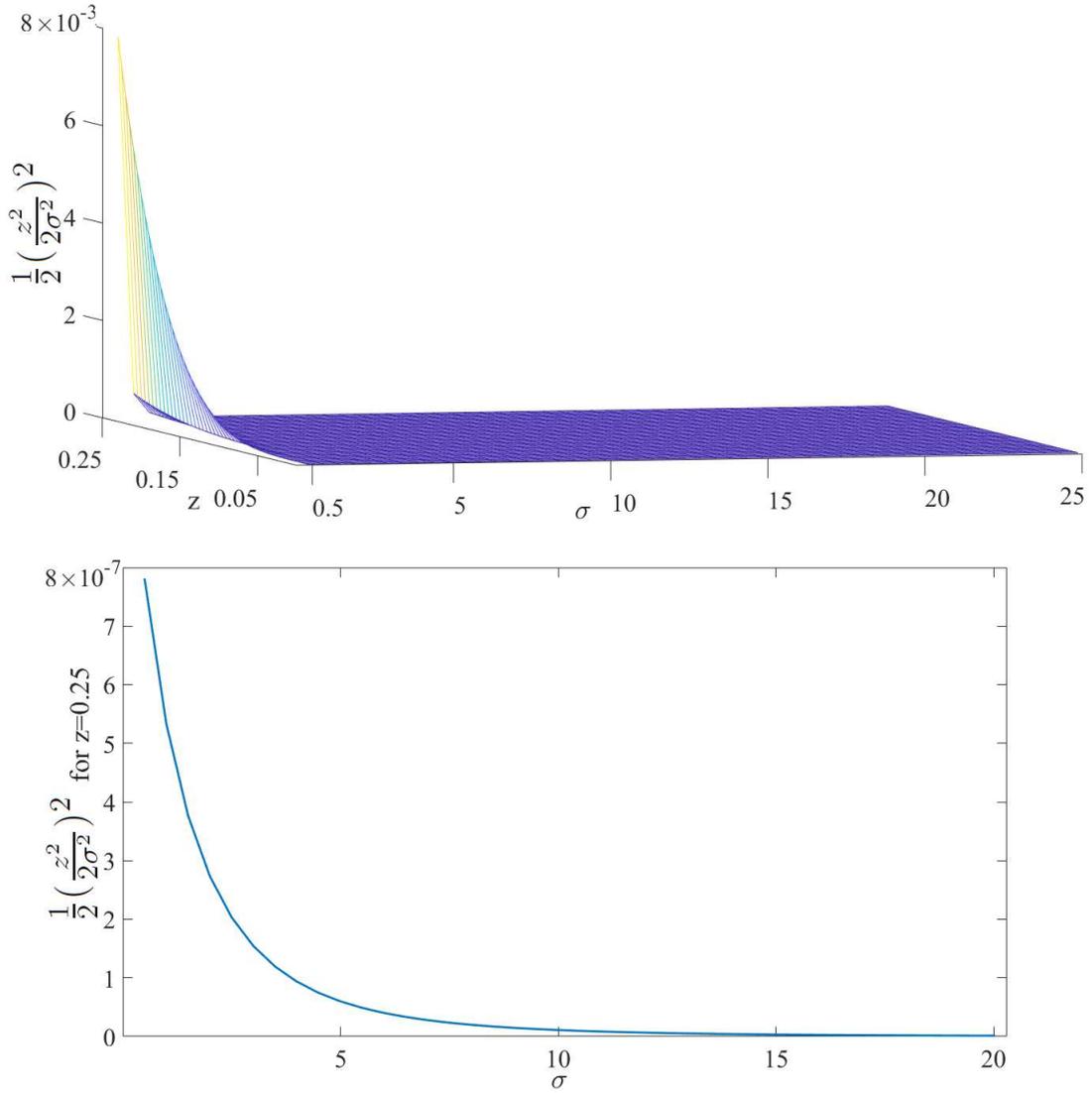}
	\caption{From top to bottom: Evolution of $\frac{1}{2}(\frac{z^2}{2\sigma^2})^2$ with the variable $z$ and the SSRT scale space parameter $\sigma$. Evolution of the approximation error upper bound $\frac{1}{2}(\frac{z^2}{2\sigma^2})^2$ when $z$ is set to 0.25. }\label{Fig15}	
\end{figure}
\end{document}